\documentclass[twocolumn]{article}
\usepackage{latexsym,amssymb,amsmath,my,graphicx}

\newtheorem{lemma}{Lemma}[section]

\newtheorem{definition}{Definition}[section]

\def\tr{^\top}
\def\Re{\mathbb R}

\def\sp{\mathop{\text{\rm span}}}
\def\nullsp{\mathop{\text{\rm null}}}

\title{\rule[.5ex]{\textwidth}{.5pt}
Galerkin Methods for Complementarity Problems and\\ Variational Inequalities
 \rule[.8ex]{\textwidth}{.5pt}}
\author{Geoffrey J. Gordon \\ Machine Learning Department \\
  Carnegie Mellon University \\ {\tt ggordon@cs.cmu.edu}}

\pagestyle{empty}

\begin{document}

\maketitle

\begin{abstract}
  Complementarity problems and variational inequalities arise in a
  wide variety of areas, including machine learning, planning, game
  theory, and physical simulation.  In all of these areas, to handle
  large-scale problem instances, we need fast approximate solution
  methods.  One promising idea is Galerkin approximation, in which we
  search for the best answer within the span of a given set of basis
  functions.  Bertsekas~\cite{bertsekas-galerkin} proposed one
  possible Galerkin method for variational inequalities.  However,
  this method can exhibit two problems in practice: its approximation
  error is worse than might be expected based on the ability of the
  basis to represent the desired solution, and each iteration requires
  a projection step that is not always easy to implement efficiently.
  So, in this paper, we present a new Galerkin method with improved
  behavior: our new error bounds depend directly on the distance from
  the true solution to the subspace spanned by our basis, and the only
  projections we require are onto the feasible region or onto the span
  of our basis.
\end{abstract}

\section{Background}

\subsection{Definitions}

We first define variational inequalities and complementarity
problems~\cite{facchinei-pang-vi-cp}.  We consider here only problems
defined over convex feasible sets, although it is possible to define
analogous problems over nonconvex sets.  

A cone is a set $K$ such that, for any $\lambda\geq 0$ and $x\in K$,
we have $\lambda x\in K$.  The dual cone of $K$ is
\[
K^* = \{ y\mid \forall x\in K,\, x\tr y\geq 0 \}
\]
For any set $C$ and point $x\in C$, the normal cone to $C$ at $x$ is
$N_C(x)=\{d\mid d\tr (y-x)\leq 0,\,\forall y\in C\}$.

\begin{definition}[Variational inequality] Given a nonempty closed
  convex set $C$ and an operator $F$, the variational inequality
  VI$(F,C)$ is to find $x$ s.t.:
  \begin{align*}
    x &\in C\\
    -F(x) &\in N_C(x)
  \end{align*}
  The VI is linear if $F(x)=Mx+q$.
\end{definition}

In a complementarity problem, the feasible set $K$ is required to be a
cone.  Define $(x,y)$ to be $K$-complementary if $x\in K$, $y\in K^*$,
and $x\tr y=0$.  Then the complementarity problem asks us to find a
$K$-complementary pair $(x,F(x))$:

\begin{definition}[Complementarity problem] Given a nonempty closed
  convex cone $K$ and an operator $F$, the complementarity problem
  CP$(F,K)$ is to find $x$ s.t.:
\begin{align*}
x &\in K\\
F(x) &\in K^*\\
x\tr F(x) &= 0
\end{align*}
The CP is linear if $F(x)=Mx+q$.
\end{definition}

A common choice is to take $K$ to be the
nonnegative orthant; in this case complementarity means that, for all
$i$, at least one of $x_i$ and $[F(x)]_i$ is zero.

\subsection{Relationships}

Variational inequalities and complementarity problems are strongly
related.  First, if the feasible set of a variational inequality is a
cone $K$, then VI$(F,K)$ is equivalent to CP$(F,K)$; that is, both
problems have the same set of solutions.  Second, if the feasible set
is the intersection of a cone $K$ with some equality constraints
$Ax=b$, then we can eliminate the equality constraints using Lagrange
multipliers: VI$(F,K\cap\{x\mid Ax=b\})$ is equivalent to VI$(\bar F,
\bar K)$, where $\bar K = K\times \Re^m$ and
\begin{align*}
  \bar F(x,\lambda) = \left(
    \begin{array}{c}
      F(x) - A\tr\lambda\\
      Ax-b
    \end{array}
  \right)
\end{align*}
Since we can represent any convex set as the intersection of a cone
with equality constraints, the above relationships mean that we can
transform any VI to a CP and vice versa.

For computational purposes, it is often convenient to transform our
problem so that the feasible set is a very simple cone.  For example,
for the polyhedron $\{x \mid Ax + b \geq 0\}$, we introduce a vector
of nonnegative slack variables $s$ and write $Ax+b=s$, with $x$ free
and $s \geq 0$.  After eliminating the equality constraints with
Lagrange multipliers, our feasible set is the cone $K = \{(s, x,
\lambda)\mid s\geq 0\}$.  This cone is \emph{separable}: it is the
product of one-dimensional cones.  The advantage of this sort of
transformation is that it is extremely efficient to work with
separable cones: e.g., Euclidean projection onto $K$ just means
thresholding each component of $s$ at $0$.

\subsection{Complexity}

If we assume that the feasible set $K$ is a separable cone, then the
computational complexity of a variational inequality or
complementarity problem depends on the operator $F$.  Even if $F$ is
restricted to be linear, it is possible to encode NP-hard problems.
An important class that ensures polynomial-time solvability is the
class of \emph{monotone} operators:

\begin{definition}
  An operator $F$ is monotone on the set $C$ if, for some $\beta\geq
  0$,
  \begin{align*}
    (x-y)\tr(F(x)-F(y)) &\geq \beta \|x-y\|^2\quad\forall x, y\in C
  \end{align*}
  It is strongly monotone if the above holds with $\beta > 0$.
\end{definition}

A linear operator $F(x)=Mx+q$ is 
monotone iff $M$ is
positive semidefinite, and strongly monotone iff $M$ is positive
definite.  (There is no requirement for $M$ to be symmetric.)

Among Lipschitz operators (which include all finite-dimensional linear
operators), strong monotonicity implies a useful contraction property:
\begin{lemma}
  \label{lem:pd-contract}
  If the operator $F$ is $\beta$-strongly monotone and $L$-Lipschitz
  on some set $C$, and if we take $\alpha = \beta/L^2$, then the
  operator $I-\alpha F$ is Lipschitz on $C$ with constant
  $\gamma=\sqrt{1-\beta^2/L^2}<1$.
\end{lemma}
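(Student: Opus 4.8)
The plan is to work entirely with squared norms, so that both hypotheses enter as quadratic expressions in $x-y$. I would fix arbitrary $x,y\in C$ and abbreviate $u = x-y$ and $v = F(x)-F(y)$. The quantity to bound is $\|(I-\alpha F)(x) - (I-\alpha F)(y)\| = \|u-\alpha v\|$, and the natural first move is to expand its square:
\[
\|u-\alpha v\|^2 = \|u\|^2 - 2\alpha\,u\tr v + \alpha^2\|v\|^2.
\]

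Next I would plug the two hypotheses into the middle and last terms. Strong monotonicity gives $u\tr v \geq \beta\|u\|^2$; since $\alpha = \beta/L^2 > 0$, this yields $-2\alpha\,u\tr v \leq -2\alpha\beta\|u\|^2$. The Lipschitz bound gives $\|v\|^2 \leq L^2\|u\|^2$. Combining these,
\[
\|u-\alpha v\|^2 \leq \bigl(1 - 2\alpha\beta + \alpha^2 L^2\bigr)\|u\|^2.
\]
Substituting $\alpha = \beta/L^2$ collapses the coefficient to $1 - \beta^2/L^2$, which is precisely $\gamma^2$; taking square roots then gives the claimed Lipschitz bound $\|u-\alpha v\| \leq \gamma\|u\|$.

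What remains is to confirm that $\gamma$ is a well-defined real number strictly below one, and this is the only step I would treat with any care, since the computation above is routine. Strictness $\gamma < 1$ is immediate: strong monotonicity forces $\beta > 0$, hence $\beta^2/L^2 > 0$. The slightly more delicate point is that the expression under the radical is nonnegative, i.e. that $\beta \leq L$. I would obtain this by chaining the two hypotheses through Cauchy--Schwarz: for $u\neq 0$,
\[
\beta\|u\|^2 \leq u\tr v \leq \|u\|\,\|v\| \leq L\|u\|^2,
\]
so $\beta \leq L$, giving $0 \leq 1 - \beta^2/L^2 < 1$. Thus $\gamma \in [0,1)$ is real, and the bound holds for every pair $x,y\in C$, establishing that $I-\alpha F$ is Lipschitz on $C$ with the stated constant.
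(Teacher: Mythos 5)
Your proof is correct and follows essentially the same route as the paper's: expand $\|u-\alpha v\|^2$, apply strong monotonicity and the Lipschitz bound termwise to get the coefficient $1-2\alpha\beta+\alpha^2L^2$, and substitute $\alpha=\beta/L^2$. Your added verification via Cauchy--Schwarz that $\beta\leq L$, so that the quantity under the radical is nonnegative, is a small point the paper leaves implicit, and is a nice touch.
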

\begin{proof}
  For any $x,y\in C$,
  \begin{align*}
    \lefteqn{\|(x-\alpha F(x))-(y-\alpha F(y))\|^2}\quad &\\
    &= \|(x-y)-\alpha (F(x)-F(y))\|^2\\
    &= \|x-y\|^2-2\alpha(x-y)\tr(F(x)-F(y))\\
    &\qquad{}+\alpha^2\|F(x)-F(y)\|^2\\
    &\leq \|x-y\|^2-2\alpha\beta\|x-y\|^2+\alpha^2L^2\|x-y\|^2\\
    &= (1-2\alpha\beta+\alpha^2L^2)\|x-y\|^2
  \end{align*}
  Choosing $\alpha$ to minimize the RHS, we have $\alpha = \beta/L^2$;
  the square of the Lipschitz constant is then
  \[
  1-2\alpha\beta+\alpha^2L^2 = 1 - \beta^2/L^2 < 1
  \]
  as claimed.  %
\end{proof}

This contraction property means that a very simple algorithm, the
\emph{projection method}, converges linearly to the solution of our
variational inequality, as we will see in the next section.

\section{Projection method}

In a variational inequality,
the condition that $-F(x)$ is in the normal cone to $C$ at $x$ is
equivalent to
\begin{align} 
  \label{eq:proj-condition}
  x &= \Pi_C(x-\alpha F(x)) 
\end{align}
where $\alpha$ is any nonnegative step size and $\Pi_C$ represents
Euclidean projection onto the set $C$.  That is, if we take a step in
the direction $-F(x)$ and project back onto $C$, we don't move.  More
formally:
\begin{lemma}
  \label{lem:normalcone}
  If $x = \Pi_C(x+d)$ for some step $d$, then $d\in N_C(x)$.
\end{lemma}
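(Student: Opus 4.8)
The plan is to reduce the statement to the standard variational (obtuse-angle) characterization of Euclidean projection onto a closed convex set, and then to read off the conclusion by a direct substitution. Concretely, I would rely on the following fact: for a nonempty closed convex set $C$ and any point $z$, the projection $p = \Pi_C(z)$ is the unique point of $C$ satisfying
\begin{align*}
  p &\in C & (z-p)\tr(y-p) &\leq 0 \quad\forall y\in C.
\end{align*}
Granting this, the lemma is immediate: set $z = x+d$ and $p = \Pi_C(x+d) = x$. The characterization then gives $((x+d)-x)\tr(y-x)\leq 0$, i.e.\ $d\tr(y-x)\leq 0$ for all $y\in C$, which is exactly the defining condition for $d\in N_C(x)$.

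Since the whole content of the lemma lives in that projection characterization, I would either cite it or include a short derivation. To derive it, I would use that $p=\Pi_C(z)$ minimizes the convex function $\phi(y)=\tfrac12\|z-y\|^2$ over $y\in C$. For any $y\in C$, convexity of $C$ lets me take the feasible perturbation $p_t = (1-t)p + t y = p + t(y-p)$ for $t\in[0,1]$. Optimality of $p$ forces $\phi(p_t)\geq\phi(p)$, and expanding the square gives $\phi(p_t)-\phi(p) = -t\,(z-p)\tr(y-p) + \tfrac12 t^2\|y-p\|^2$. Dividing by $t>0$ and letting $t\to 0^+$ yields $(z-p)\tr(y-p)\leq 0$, as required.

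I would emphasize that this is the only place where convexity of $C$ is actually used, and that it is essential: without it the feasible segment $p_t$ need not lie in $C$, and the first-order argument collapses. Note also that the lemma only requires the ``forward'' direction of the characterization (projection implies the inequality), so I do not need the uniqueness or the converse half. The step I expect to be the genuine (though mild) obstacle is therefore justifying the projection inequality itself—establishing existence of the minimizer and pushing through the limiting argument above; everything after that is a one-line substitution. Given that this characterization is completely standard for projections onto closed convex sets, I would most likely state it as a known fact and spend the proof on the substitution $z=x+d$, $p=x$, making the identification with $N_C(x)$ explicit.
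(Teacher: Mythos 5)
Your proof is correct. Note that the paper itself offers no proof to compare against---it states the lemma and writes ``(We omit the proof.)''---so your argument simply fills that gap, and it does so with exactly the standard tool one would expect: the variational characterization $(z-\Pi_C(z))\tr(y-\Pi_C(z))\leq 0$ for all $y\in C$, followed by the substitution $z=x+d$, $p=x$, which immediately yields $d\tr(y-x)\leq 0$, i.e.\ $d\in N_C(x)$. Your derivation of the characterization itself (first-order optimality of $p_t=p+t(y-p)$ and letting $t\to 0^+$) is also sound, and you are right that convexity of $C$ is the one essential hypothesis; the only implicit point worth making explicit is that $x\in C$ holds automatically because $x$ is the value of a projection onto $C$, so the normal-cone condition is being asserted at a feasible point, as the paper's definition of $N_C(x)$ requires.
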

(We omit the proof.)

The projection method simply treats~(\ref{eq:proj-condition}) as an
assignment: start from an arbitrary $x^{(0)}$, then for
$t=1,2,\ldots$, set 
\begin{align}
  \label{eq:proj-method}
  x^{(t)}&=T(x^{(t-1)})\equiv\Pi_C(x^{(t-1)}-\alpha F(x^{(t-1)}))
\end{align}
(We could also choose a separate step size $\alpha_t$ for each step
$t$, but for simplicity we ignore this possibility.)

The projection method converges linearly for strongly-monotone $F$,
since its update operator is a contraction:
\begin{lemma}
  \label{lem:proj}
  In the projection method, suppose the operator $(I-\alpha F)$ is
  $\gamma$-Lipschitz.  Then the variational
  inequality~(\ref{eq:proj-condition}) has a unique solution $x^*$.
  Furthermore, for any $\epsilon>0$, if we take $t\geq
  \ln(\epsilon)/\ln(\gamma)$ steps of the projection method, we will
  have $\|x^{(t)}-x^*\|\leq\epsilon\|x^{(0)}-x^*\|$.
\end{lemma}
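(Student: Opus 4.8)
The plan is to recognize that the projection-method update $T(x)=\Pi_C(x-\alpha F(x))$ is a composition of two maps, and that each is Lipschitz with a controllable constant, so that $T$ itself is a contraction. The outer map is the Euclidean projection $\Pi_C$, which for a closed convex set $C$ is \emph{nonexpansive}, i.e.\ $1$-Lipschitz: this is the one fact I would need to invoke (it follows from the obtuse-angle characterization of projection onto a convex set, and I would either cite it or dispatch it in a line). The inner map is $x\mapsto x-\alpha F(x)$, which is $\gamma$-Lipschitz by hypothesis. Composing a $1$-Lipschitz map with a $\gamma$-Lipschitz map yields a $\gamma$-Lipschitz map, so
\begin{align*}
  \|T(x)-T(y)\| &\leq \|(x-\alpha F(x))-(y-\alpha F(y))\|\\
  &\leq \gamma\|x-y\|
\end{align*}
for all $x,y\in C$, and since $\gamma<1$ this makes $T$ a contraction on $C$.

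Next I would apply the contraction-mapping (Banach fixed-point) theorem: a contraction on a nonempty closed subset of a complete space has a unique fixed point $x^*$. Here I should note that $T$ maps $C$ into $C$ (its range is the image of $\Pi_C$), so the fixed point lies in $C$ as required. The link back to the variational inequality is the equivalence stated in~(\ref{eq:proj-condition}): a point solves VI$(F,C)$ exactly when it is a fixed point of $T$. Hence the unique fixed point $x^*$ of $T$ is precisely the unique solution of the variational inequality, which establishes the first claim.

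For the convergence-rate claim I would iterate the contraction bound. Since $x^*$ is a fixed point, $x^{(t)}=T(x^{(t-1)})$ gives $\|x^{(t)}-x^*\|=\|T(x^{(t-1)})-T(x^*)\|\leq\gamma\|x^{(t-1)}-x^*\|$, and unrolling yields $\|x^{(t)}-x^*\|\leq\gamma^{t}\|x^{(0)}-x^*\|$. To guarantee $\gamma^{t}\leq\epsilon$, I take logarithms; because $\ln\gamma<0$, dividing reverses the inequality and gives the stated threshold $t\geq\ln(\epsilon)/\ln(\gamma)$.

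I do not expect a serious obstacle here, since the whole argument is the standard contraction-mapping pattern. The only step that requires genuine care rather than bookkeeping is the nonexpansiveness of $\Pi_C$; everything else (the composition of Lipschitz constants, existence and uniqueness of the fixed point, and the logarithmic step count) is routine once that fact and the fixed-point characterization of the VI are in hand.
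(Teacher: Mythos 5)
Your proposal is correct and follows essentially the same route as the paper's proof: compose the $1$-Lipschitz projection $\Pi_C$ with the $\gamma$-Lipschitz map $(I-\alpha F)$ to get a contraction, invoke the contraction mapping theorem for existence and uniqueness, and unroll $\|x^{(t)}-x^*\|\leq\gamma^t\|x^{(0)}-x^*\|$ to obtain the iteration count. Your added remarks (that $T$ maps $C$ into $C$, and that dividing by $\ln\gamma<0$ reverses the inequality) are careful touches the paper leaves implicit, but they do not change the argument.
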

\begin{proof}
  The update operator $T=\Pi_C\circ(I-\alpha F)$ is
  $\gamma$-Lipschitz, since it is the composition of a projection
  operator (which is $1$-Lipschitz) with $(I-\alpha F)$.  Existence
  and uniqueness of the solution follow from the contraction mapping
  theorem; for the error bound,
  \begin{align*}
    \|x^{(t)}-x^*\| &= \|T x^{(t-1)} - Tx^*\|\\
    &\leq \gamma \|x^{(t-1)}-x^*\|\\
    &\leq \gamma^t \|x^{(0)}-x^*\|
  \end{align*}
  That is, the distance between $x^{(t)}$ and $x^*$ reduces by a
  factor $\gamma$ with each application of $T$.  Substituting the
  assumed value for $t$ now yields the desired bound.
\end{proof}

\section{Galerkin method---I}

If the dimension of $x$ is very large, each iteration of the
projection method may be too slow to be practical.  The problem is
exacerbated if $\gamma$ is close to 1.  
In this case, it makes sense to search for a Galerkin approximation,
i.e., an approximate solution within the span of some basis matrix
$\Phi$.  The hope is that we can design an approximate projection
method that will find such a solution much more cheaply than the raw
projection method.

Bertsekas~\cite{bertsekas-galerkin} proposed one such Galerkin
approximation: assume that $\hat C \equiv \sp(\Phi)\cap C$ is
nonempty, and define the approximation as the fixed point of
\begin{align}
  \label{eq:berts-proj-method}
  x^{(t)}&=\hat T(x^{(t-1)})\equiv\Pi_{\hat C}(x^{(t-1)}-\alpha F(x^{(t-1)}))
\end{align}
Compared to~(\ref{eq:proj-method}), we simply project onto $\hat C$
instead of $C$ at each iteration.  Note
that~(\ref{eq:berts-proj-method}) also defines an algorithm for
computing the Galerkin approximation, although not the only possible
one.

If the dimension of $x$ is $n$, but the rank of $\Phi$ is $k<n$, we
may be able to implement the iteration~(\ref{eq:berts-proj-method})
cheaply: e.g., for a linear variational inequality, we can precompute
$\Phi\tr M \Phi$ and $\Phi\tr q$, and work in $k$-dimensional space
instead of $n$-dimensional space.  The only caveat is projection onto
$\hat C$; depending on the forms of $\Phi$ and $C$, this projection
may or may not be possible to implement efficiently.

Essentially the same convergence rate analysis holds for the
Bertsekas-Galerkin projection method as for the original projection
method:

\begin{lemma}
  Suppose  $(I-\alpha F)$ is $\gamma$-Lipschitz and  $\hat C$
  is nonempty.
  Then the
  iteration~(\ref{eq:berts-proj-method}) has a unique fixed point, say
  $\hat x$; and, we need at most $t\geq \ln(\epsilon)/\ln(\gamma)$
  iterations to achieve error $\|x^{(t)}-\hat
  x\|\leq\epsilon\|x^{(0)}-\hat x\|$.
\end{lemma}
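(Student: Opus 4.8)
The plan is to follow the proof of Lemma~\ref{lem:proj} almost verbatim, since the only change from~(\ref{eq:proj-method}) to~(\ref{eq:berts-proj-method}) is that we project onto $\hat C$ rather than $C$. The entire argument of Lemma~\ref{lem:proj} rests on two facts about $C$: that Euclidean projection onto it is well-defined and $1$-Lipschitz, and that it is a complete metric space so that the contraction mapping theorem applies. I would therefore first verify that $\hat C$ inherits both properties, and then reuse the telescoping estimate unchanged.

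The main technical point is to check that $\hat C = \sp(\Phi)\cap C$ is a nonempty closed convex set. It is nonempty by assumption. It is convex as the intersection of two convex sets ($\sp(\Phi)$ is a subspace and $C$ is convex by hypothesis), and it is closed as the intersection of two closed sets ($\sp(\Phi)$ is a finite-dimensional subspace, hence closed, and $C$ is closed by hypothesis). Because $\hat C$ is nonempty, closed, and convex, Euclidean projection $\Pi_{\hat C}$ onto it is well-defined and $1$-Lipschitz, exactly as for $\Pi_C$.

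With this in hand the rest is mechanical. The update operator $\hat T=\Pi_{\hat C}\circ(I-\alpha F)$ is a composition of a $1$-Lipschitz map with a $\gamma$-Lipschitz map, so it is $\gamma$-Lipschitz; since $\gamma<1$ it is a contraction on the complete metric space $\hat C$. The contraction mapping theorem then yields a unique fixed point $\hat x$, and the same chain of inequalities as in Lemma~\ref{lem:proj},
\begin{align*}
  \|x^{(t)}-\hat x\| = \|\hat T x^{(t-1)} - \hat T\hat x\| \leq \gamma\|x^{(t-1)}-\hat x\| \leq \gamma^t\|x^{(0)}-\hat x\|,
\end{align*}
gives the claimed error bound after substituting $t\geq\ln(\epsilon)/\ln(\gamma)$.

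The only real obstacle, and it is a mild one, is verifying the closedness and convexity of $\hat C$; once that is established the contraction and fixed-point machinery carries over without modification. I would also note in passing that the statement implicitly requires $\gamma<1$ (as guaranteed, for instance, by Lemma~\ref{lem:pd-contract} when $F$ is strongly monotone and Lipschitz), since both the contraction mapping theorem and the iteration count $\ln(\epsilon)/\ln(\gamma)$ depend on it.
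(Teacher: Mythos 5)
Your proposal is correct and follows exactly the route the paper takes: the paper's proof is literally ``identical to the proof of Lemma~\ref{lem:proj},'' with the projection onto $C$ replaced by projection onto $\hat C$. Your additional verification that $\hat C=\sp(\Phi)\cap C$ is nonempty, closed, and convex (so that $\Pi_{\hat C}$ is well-defined and $1$-Lipschitz) just makes explicit the detail the paper leaves implicit, and your remark that $\gamma<1$ is implicitly required is likewise accurate.
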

\begin{proof}
  Identical to the proof of Lemma~\ref{lem:proj}.
\end{proof}

In addition, we can prove a bound on the distance between the
approximate solution $\hat x$ and the original solution $x^*$:
\begin{lemma}
  \label{lem:berts-err-bnd}
  For the Bertsekas-Galerkin method, the error between the approximate
  solution $\hat x$ and the true solution $x^*$ satisfies:
  \begin{align}
    \label{eq:berts-err-bnd}
    \|\hat x-x^*\| &\leq \|\Pi_{\hat C}(x^*)-x^*\|/(1-\gamma)
  \end{align}
\end{lemma}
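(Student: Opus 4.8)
The plan is to bound $\|\hat x-x^*\|$ by comparing $\hat x$ with the nearest feasible approximation $\bar x\equiv\Pi_{\hat C}(x^*)$, whose distance to $x^*$ is exactly the numerator on the right-hand side. A triangle inequality $\|\hat x-x^*\|\le\|\hat x-\bar x\|+\|\bar x-x^*\|$ then reduces everything to showing $\|\hat x-\bar x\|\le\tfrac{\gamma}{1-\gamma}\|\bar x-x^*\|$; note that $\|\bar x-x^*\|$ already lower-bounds $\|\hat x-x^*\|$ since $\hat x\in\hat C$ and $\bar x$ is the closest point of $\hat C$ to $x^*$, so the target is to show $\hat x$ is at most a factor $1/(1-\gamma)$ worse than this best achievable error.

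To estimate $\|\hat x-\bar x\|$ I would start from the fixed-point relation $\hat x=\Pi_{\hat C}((I-\alpha F)(\hat x))$ and the obtuse-angle property of Euclidean projection onto the convex set $\hat C$: for every $y\in\hat C$, $((I-\alpha F)(\hat x)-\hat x)\tr(y-\hat x)\le0$. Taking $y=\bar x$ and rearranging gives $\|\hat x-\bar x\|^2\le ((I-\alpha F)(\hat x)-\bar x)\tr(\hat x-\bar x)$. I would then split the left factor as \[((I-\alpha F)(\hat x)-(I-\alpha F)(x^*))+(-\alpha F(x^*))+(x^*-\bar x).\] The first piece contributes at most $\gamma\|\hat x-x^*\|\,\|\hat x-\bar x\|$ because $I-\alpha F$ is $\gamma$-Lipschitz; the third piece contributes a nonpositive amount because $\bar x=\Pi_{\hat C}(x^*)$ gives $(x^*-\bar x)\tr(\hat x-\bar x)\le0$.

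The main obstacle is the middle piece, $(-\alpha F(x^*))\tr(\hat x-\bar x)$. Its difficulty is structural: $x^*$ is a fixed point of $\Pi_C\circ(I-\alpha F)$ but not of $\Pi_{\hat C}\circ(I-\alpha F)$, and the leftover displacement $-\alpha F(x^*)$ is precisely the normal-cone vector, $-\alpha F(x^*)\in N_C(x^*)$. A plain Cauchy--Schwarz bound on this term is far too lossy, so the crux is to argue that this normal-cone displacement is harmless after projection onto the smaller set $\hat C$. The tool I would reach for is the inclusion $\hat C\subseteq C$: every $y\in\hat C$ also lies in $C$, so $(-\alpha F(x^*))\tr(y-x^*)\le0$; the hope is to convert this ``points away from $\hat C$'' property into a sign or smallness statement for $(-\alpha F(x^*))\tr(\hat x-\bar x)$.

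The cleanest case, and the one I would write up first, is when this middle term vanishes --- e.g. the unconstrained problem $C=\Re^n$, where $F(x^*)=0$ and $\hat C=\sp(\Phi)$ is a genuine subspace. Then $\hat x-\bar x\in\sp(\Phi)$ is orthogonal to $\bar x-x^*$, so Pythagoras gives $\|\hat x-x^*\|^2=\|\hat x-\bar x\|^2+\|\bar x-x^*\|^2$, while the estimate above collapses to $\|\hat x-\bar x\|\le\gamma\|\hat x-x^*\|$; combining them yields the even sharper $\|\hat x-x^*\|\le\|\bar x-x^*\|/\sqrt{1-\gamma^2}$, which implies the stated $1/(1-\gamma)$. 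Extending this to the general case, where $-\alpha F(x^*)$ need not be orthogonal to $\sp(\Phi)$ and the projection $\bar x$ may sit on the boundary of $\hat C$, is exactly where I expect the real difficulty to lie.
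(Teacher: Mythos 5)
Your proposal, as you yourself acknowledge, does not close --- and the break at the middle term is not a technicality.  Write that term as
\begin{align*}
  (-\alpha F(x^*))\tr(\hat x-\bar x)
  &= (-\alpha F(x^*))\tr(\hat x-x^*)\\
  &\qquad{}+ (\alpha F(x^*))\tr(\bar x-x^*).
\end{align*}
The first summand is indeed $\leq 0$, since $\hat x\in\hat C\subseteq C$ and $-\alpha F(x^*)\in N_C(x^*)$.  But the second summand is \emph{nonnegative} by the very same normal-cone inequality applied to $\bar x\in\hat C\subseteq C$, so it cannot be discarded, and the only generic upper bound on it is Cauchy--Schwarz, $\alpha\|F(x^*)\|\,\|\bar x-x^*\|$, which drags in the uncontrolled factor $\alpha\|F(x^*)\|$ and destroys the $\gamma/(1-\gamma)$ form you need.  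So the obtuse-angle route proves the lemma only when this term vanishes (e.g.\ your unconstrained case, where $F(x^*)=0$); in general it stalls exactly where you predicted.

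The paper's proof avoids normal cones entirely.  It observes that $T(x^*)=x^*$, hence $\hat T(x^*)=\Pi_{\hat C}(T(x^*))=\Pi_{\hat C}(x^*)=\bar x$: one step of the approximate update, started at $x^*$, lands exactly on $\bar x$.  Granting this identity, your target inequality is immediate without any projection calculus: $\|\hat x-\bar x\|=\|\hat T(\hat x)-\hat T(x^*)\|\leq\gamma\|\hat x-x^*\|\leq\gamma(\|\hat x-\bar x\|+\|\bar x-x^*\|)$, so $\|\hat x-\bar x\|\leq\tfrac{\gamma}{1-\gamma}\|\bar x-x^*\|$, and your triangle inequality finishes.  (The paper phrases this as iterating $\hat T$ from $x^*$ and summing the geometric series of successive displacements.)  Be aware, though, that the identity $\hat T=\Pi_{\hat C}\circ T$ that the paper invokes amounts to $\Pi_{\hat C}\circ\Pi_C=\Pi_{\hat C}$, and nested convex projections do not compose this way in general; what holds unconditionally is only $\hat T(x^*)=\Pi_{\hat C}(x^*-\alpha F(x^*))$, which gives the bound with numerator $\|\Pi_{\hat C}(x^*-\alpha F(x^*))-x^*\|$ rather than $\|\Pi_{\hat C}(x^*)-x^*\|$.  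The gap between those two quantities is governed by precisely the normal-cone displacement $-\alpha F(x^*)$ on which you got stuck --- so your difficulty is pointing at a real subtlety that the paper's own argument passes over silently, not at a trick you merely failed to spot.
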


\begin{proof}
  Note that $\hat T(x) = \Pi_{\hat C}(T(x))$, and $T(x^*)=x^*$.  So,
  $\hat T(x^*)=\Pi_{\hat C}(T(x^*))=\Pi_{\hat C}(x^*)$, and
  \begin{align*}
    \|\hat T(x^*)-x^*\|
    &= \|\Pi_{\hat C}(x^*)-x^*\|
  \end{align*}
  Applying $\hat T$ repeatedly to both $x^*$ and $\hat T(x^*)$, and
  using the fact that $\hat T$ is $\gamma$-Lipschitz, we have
  \begin{align*}
    \|\hat T(\hat T(x^*))-\hat T(x^*)\|
    &\leq \gamma\|\Pi_{\hat C}(x^*)-x^*\|\\
    \|\hat T^{t}(x^*)-\hat T^{t-1}(x^*)\|
    &\leq \gamma^{t-1}\|\Pi_{\hat C}(x^*)-x^*\|
  \end{align*}
  So, after $t$ iterations, we have
  \begin{align*}
    \|\hat T^{t}(x^*)-x^*\|
    &\leq \sum_{i=1}^t\|\hat T^{i}(x^*)-\hat T^{i-1}(x^*)\|\\
    &\leq \sum_{i=1}^t\gamma^{i-1}\|\Pi_{\hat C}(x^*)-x^*\|\\
    &\leq \|\Pi_{\hat C}(x^*)-x^*\|/(1-\gamma)
  \end{align*}
  where the last line follows by summing a geometric series.
  Since this inequality holds for all $t>0$, it holds in the limit as
  $t\to\infty$, yielding~(\ref{eq:berts-err-bnd}).
\end{proof}

The bound~(\ref{eq:berts-err-bnd}) depends on $\|\Pi_{\hat
  C}(x^*)-x^*\|$, a measure of our \emph{representation error}---that
is, how closely we can approximate $x^*$ in our low-dimensional
representation.

Qualitatively, compared to the exact projection method, the
Bertsekas-Galerkin method tightens the feasible region and relaxes the
optimality condition: both the iterates $x^{(t)}$ and the approximate
solution $\hat x$ will be exactly feasible.  (In fact, they will be in
$\hat C\subseteq C$.)  But, at convergence, the step direction
$-F(\hat x)$ will not necessarily be in a normal cone of $C$, as it
would in the exact solution.  Instead, Lemma~\ref{lem:normalcone}
means that the step direction will be in the normal cone to $\hat C$
at $\hat x$.

Another way to interpret this approximate optimality criterion is
that, since $\hat C=C\cap\sp(\Phi)$ is the interection of two sets,
the normal cone to $\hat C$ is the sum of their two normal cones,
$N_C(\hat x)+\nullsp(\Phi\tr)$.  In other words, for some residual
vector $\epsilon\in\nullsp(\Phi\tr)$, we have that $-F(\hat
x)+\epsilon$ is in the normal cone to $C$ at $\hat x$.

\section{Galerkin method---II}

While the error bound~(\ref{eq:berts-err-bnd}) is nice to have, it is
less than perfectly satisfying: the representation error measure
$\|\Pi_{\hat C}(x^*)-x^*\|$ may be quite large, even if $\sp(\Phi)$
passes close to $x^*$.  See Fig.~\ref{fig:proj-error} for an example.
An
extreme version of this problem is that $\sp(\Phi)$ might not even
intersect the feasible set $C$.

\begin{figure}
\centerline{\includegraphics[width=0.5\columnwidth]{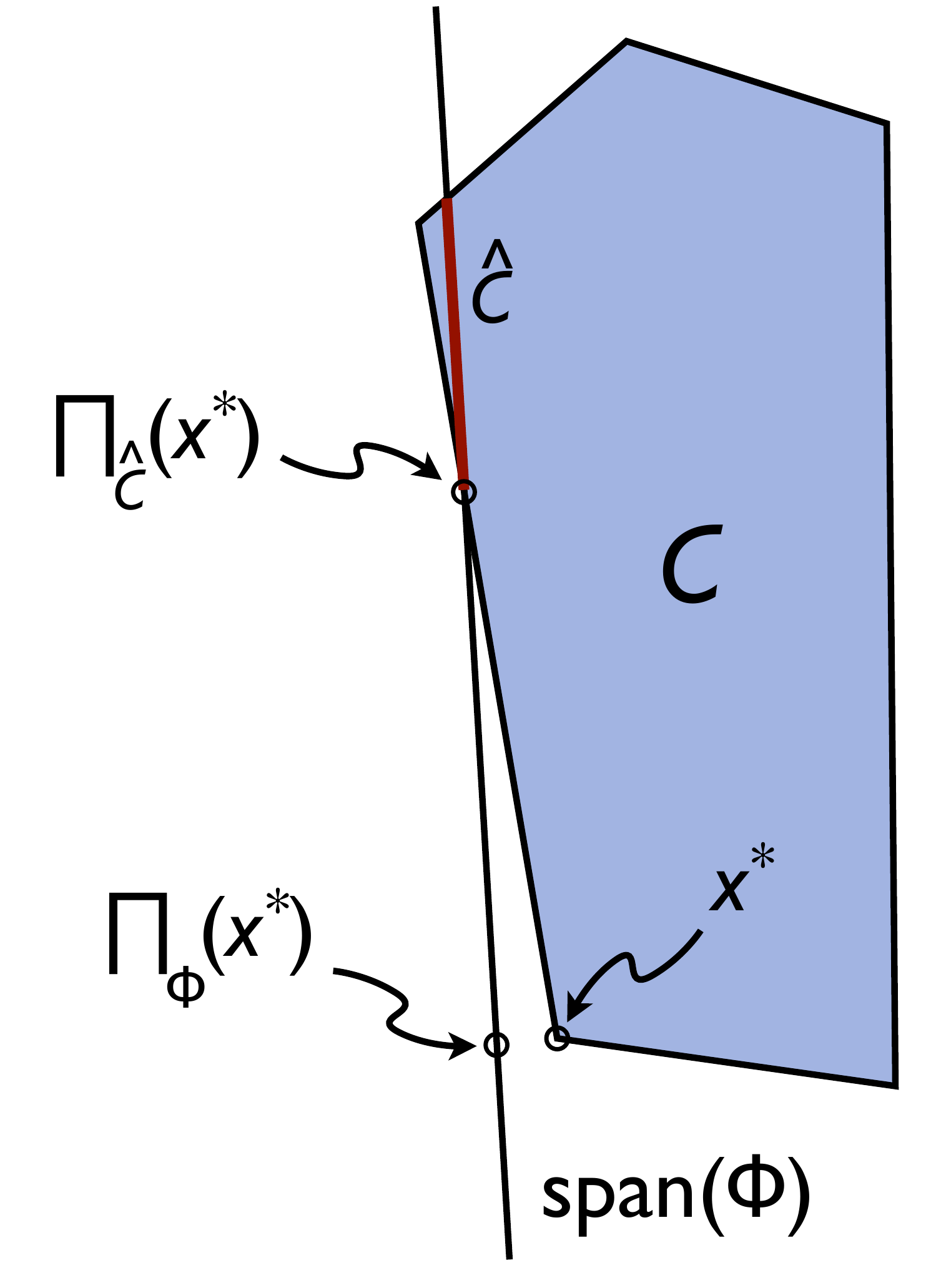}}
\caption{Illustration of error bound~(\ref{eq:berts-err-bnd}).}
\label{fig:proj-error}
\end{figure}

To remedy this problem, in this section we propose a simple new
Galerkin method, and bound the new method's convergence rate and
error.  Our new method's bounds will depend on a different measure of
representation error, based on projection onto $\sp(\Phi)$ instead of
$\hat C$.  Since $\sp(\Phi)\supseteq\hat C$, we expect this change to
lead to an improvement in the bound: for any given vector $z$, the
error $\|\Pi_\Phi(z)-z\|$ will be no larger, and sometimes much
smaller, than the error $\|\Pi_{\hat C}(z)-z\|$.

To enable projection onto $\sp(\Phi)$ instead of $\hat C$, we modify
the algorithm as described below; in particular, we project the
proposed update $z = x-\alpha F(x)$ instead of the proposed solution
$x$.  When $\alpha$ is small, we have $z\approx x$, so that
$\|z-\Pi_\Phi(z)\| \lesssim \|x-\Pi_{\hat C}(x)\|$.  For larger
$\alpha$ it is possible that this inequality could be reversed; but in
our (limited) experience, even for larger $\alpha$, the new method's
bounds are at least as tight as those of the Bertsekas-Galerkin
method, and often much tighter.

In more detail, our approximation is based on the following
re-arrangement of the projection method: pick $z^{(0)}$ or $x^{(0)}$
arbitrarily, and for all $t=1,2,\ldots$, repeat:
\begin{align}
  \label{eq:new-proj-method1}
  x^{(t-1)} &= \Pi_C(z^{(t-1)})\\
  \label{eq:new-proj-method2}
  z^{(t)} &= x^{(t-1)} - \alpha F(x^{(t-1)})
\end{align}
It is easy to see that the sequence $x^{(t)}$ computed by this
iteration is the same as the one computed by~(\ref{eq:proj-method}),
so long as we start from the same value of $x^{(0)}$.  

Now we insert a projection operator into~(\ref{eq:new-proj-method2}),
and define our new Galerkin approximation as the fixed point of the
resulting iteration:
\begin{align}
  \label{eq:new-gal-method1}
  x^{(t-1)} &= \Pi_C(z^{(t-1)})\\
  \label{eq:new-gal-method2}
  z^{(t)} &= \Pi_\Phi(x^{(t-1)} - \alpha F(x^{(t-1)}))
\end{align}
(As before, fixed-point iteration is only one way to compute the
approximation; we will discuss others below.)

Right away we can see a difference between the new method and the old
one: instead of projecting onto the intersection of $C$ and
$\sp(\Phi)$, we project onto each of $C$ and $\sp(\Phi)$ at different
points in the update.  So, for example, it is not even necessary that
$C\cap\sp(\Phi)$ be nonempty.

Another important difference is what is required for efficient
implementation of the
iteration~(\ref{eq:new-gal-method1}--\ref{eq:new-gal-method2}).  For
example, suppose that $C$ is the nonnegative orthant and $\Phi$ is an
arbitrary $n\times k$ basis matrix.  The Bertsekas-Galerkin method
requires us to project onto $C\cap\sp(\Phi)$, a quadratic program in
$n$ dimensions.  The new method requires only separate projections
onto $C$ and $\sp(\Phi)$, both of which are much faster than solving a
quadratic program: the former is componentwise thresholding, while the
latter is a rank-$k$ linear operator which we can precompute.  (For
example, if $\Phi$ is dense, a good strategy might be to precompute
its QR decomposition; then we can threshold and project in total time
$O(nk)$ with a very low constant.)

As before, the entire update is $\gamma$-Lipschitz (being the
composition of $(I-\alpha F)$ with some projections, each of which is
$1$-Lipschitz).  We therefore have:
\begin{lemma}
  If $(I-\alpha F)$ is $\gamma$-Lipschitz, then the
  update~(\ref{eq:new-gal-method1}--\ref{eq:new-gal-method2}), viewed
  either as an update for $x$ or as an update for $z$, is
  $\gamma$-Lipschitz.  So, the
  iteration~(\ref{eq:new-gal-method1}--\ref{eq:new-gal-method2}) has a
  unique fixed point, say $(\bar x,\bar z)$; and, we need at most
  $t\geq \ln(\epsilon)/\ln(\gamma)$ iterations to achieve error
  $\|x^{(t)}-\bar x\|\leq\epsilon\|x^{(0)}-\bar x\|$ or
  $\|z^{(t)}-\bar z\|\leq\epsilon\|z^{(0)}-\bar z\|$.
\end{lemma}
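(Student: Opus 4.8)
The plan is to collapse the two-step iteration~(\ref{eq:new-gal-method1}--\ref{eq:new-gal-method2}) into a single self-map in each of the two variables, and then apply the contraction mapping theorem exactly as in the proof of Lemma~\ref{lem:proj}. First I would eliminate one variable to exhibit the pure updates. Substituting~(\ref{eq:new-gal-method1}) into~(\ref{eq:new-gal-method2}) gives the $z$-update $z^{(t)}=S_z(z^{(t-1)})$ with
\[
  S_z = \Pi_\Phi \circ (I-\alpha F) \circ \Pi_C,
\]
while substituting~(\ref{eq:new-gal-method2}) into~(\ref{eq:new-gal-method1}), advanced by one step, gives the $x$-update $x^{(t)}=S_x(x^{(t-1)})$ with
\[
  S_x = \Pi_C \circ \Pi_\Phi \circ (I-\alpha F).
\]
Each of $S_x$ and $S_z$ is the composition of the $\gamma$-Lipschitz operator $(I-\alpha F)$ with the two projections $\Pi_C$ and $\Pi_\Phi$. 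Since Euclidean projection onto a nonempty closed convex set (in particular onto the subspace $\sp(\Phi)$ and onto $C$) is $1$-Lipschitz, both compositions are $\gamma$-Lipschitz; this establishes the first claim for both views of the update.

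Because $\gamma<1$, the contraction mapping theorem yields a unique fixed point $\bar z$ of $S_z$ and a unique fixed point $\bar x$ of $S_x$. To see that these two fixed points form a single consistent pair, I would set $\bar x := \Pi_C(\bar z)$ and check directly that $(\bar x,\bar z)$ satisfies both~(\ref{eq:new-gal-method1}) and~(\ref{eq:new-gal-method2}): by construction $\bar x = \Pi_C(\bar z)$, and $\Pi_\Phi((I-\alpha F)(\bar x)) = S_z(\bar z) = \bar z$. This $\bar x$ is then a fixed point of $S_x$, so by uniqueness it coincides with the $\bar x$ produced by the contraction mapping theorem. Hence the joint iteration has a unique fixed point $(\bar x,\bar z)$.

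Finally, the error bound follows the same geometric argument as Lemma~\ref{lem:proj}. Applying $\gamma$-Lipschitzness of $S_z$ repeatedly, $\|z^{(t)}-\bar z\| = \|S_z(z^{(t-1)})-S_z(\bar z)\| \leq \gamma\|z^{(t-1)}-\bar z\| \leq \gamma^t\|z^{(0)}-\bar z\|$, and identically for $x$ with $S_x$; requiring $\gamma^t\leq\epsilon$ and taking logarithms (recall $\ln\gamma<0$, so the inequality flips) yields $t\geq\ln(\epsilon)/\ln(\gamma)$, as claimed.

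I expect no serious obstacle here: the only step requiring any care is the consistency check that the separately-obtained fixed points $\bar x$ and $\bar z$ belong to the same pair, rather than being unrelated fixed points of the two collapsed maps. Everything else is a near-verbatim reuse of the contraction-mapping argument already given for the exact projection method, with the single operator $(I-\alpha F)$ replaced by its projected versions $S_x$ and $S_z$.
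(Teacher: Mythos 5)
Your proof is correct and follows essentially the same route the paper takes: the paper's proof is simply a reference to the contraction-mapping argument of Lemma~\ref{lem:proj}, which is exactly what you have spelled out via the collapsed maps $S_x$ and $S_z$ (composition of the $\gamma$-Lipschitz operator $(I-\alpha F)$ with $1$-Lipschitz projections, then geometric decay). The only thing you add beyond the paper is the explicit check that the separately-obtained fixed points of $S_x$ and $S_z$ assemble into a single consistent pair $(\bar x,\bar z)$ --- a detail the paper leaves implicit.
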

\begin{proof}
  Identical to the proof of Lemma~\ref{lem:proj}.
\end{proof}

Write $x^*$ and $z^*$ for the true solution to our variational
inequality, that is, the fixed point
of~(\ref{eq:new-proj-method1}--\ref{eq:new-proj-method2}).  We can
bound the distance between $\bar z$ and $z^*$:

\begin{lemma}
  For the Galerkin method
  of~(\ref{eq:new-gal-method1}--\ref{eq:new-gal-method2}), the error
  between the approximate solution $\bar z$ and the true solution
  $z^*$ satisfies:
  \[
  \|z^*-\bar z\| \leq \|z^* - \Pi_\Phi z^*\|/(1-\gamma)
  \]
  Similarly, the error in $\bar x$ satisfies
  \[
  \|x^*-\bar x\| \leq \|z^* - \Pi_\Phi z^*\|/(1-\gamma)
  \]
\end{lemma}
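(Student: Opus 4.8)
The plan is to recast the entire iteration in terms of the single variable $z$, so that the argument parallels the proof of Lemma~\ref{lem:berts-err-bnd} almost verbatim. First I would define the ``exact'' $z$-update $S(z) = (I-\alpha F)(\Pi_C(z))$, obtained by composing~(\ref{eq:new-proj-method1}) and~(\ref{eq:new-proj-method2}). Since $\Pi_C$ is $1$-Lipschitz and $(I-\alpha F)$ is $\gamma$-Lipschitz, their composition $S$ is $\gamma$-Lipschitz, and the true solution is characterized by $z^* = S(z^*)$. The Galerkin $z$-update is then simply $\hat S = \Pi_\Phi\circ S$, which is likewise $\gamma$-Lipschitz (projection onto $\sp(\Phi)$ being $1$-Lipschitz), and whose unique fixed point is $\bar z = \hat S(\bar z)$.

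The crux of the argument is the observation that $\hat S(z^*) = \Pi_\Phi(S(z^*)) = \Pi_\Phi(z^*)$, because $z^*$ is already a fixed point of $S$. Hence $\|\hat S(z^*) - z^*\| = \|\Pi_\Phi(z^*) - z^*\|$, exactly the representation-error quantity in the claimed bound. From here I would reuse the telescoping argument of Lemma~\ref{lem:berts-err-bnd}: applying $\hat S$ repeatedly and invoking $\gamma$-Lipschitzness gives $\|\hat S^t(z^*) - \hat S^{t-1}(z^*)\| \leq \gamma^{t-1}\|\Pi_\Phi(z^*) - z^*\|$, so the triangle inequality and a geometric sum yield $\|\hat S^t(z^*) - z^*\| \leq \|\Pi_\Phi(z^*) - z^*\|/(1-\gamma)$ for every $t$. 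Letting $t\to\infty$, $\hat S^t(z^*)\to\bar z$, which establishes the first bound.

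For the error in $x$, I would observe that $x^* = \Pi_C(z^*)$ and $\bar x = \Pi_C(\bar z)$, so since $\Pi_C$ is $1$-Lipschitz, $\|x^*-\bar x\| = \|\Pi_C(z^*)-\Pi_C(\bar z)\| \leq \|z^*-\bar z\|$; the right-hand side is already controlled by the first part, giving the second bound immediately.

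The main obstacle here is not any hard estimate but the modeling choice: recognizing that one should track the error in $z$ rather than in $x$. This is what makes the structure align with the Bertsekas proof, since the new method's subspace projection $\Pi_\Phi$ is applied when forming $z$, not $x$; and it is precisely this shift of viewpoint that replaces the representation error $\|\Pi_{\hat C}(x^*)-x^*\|$ by the typically smaller quantity $\|\Pi_\Phi(z^*)-z^*\|$.
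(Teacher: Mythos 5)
Your proposal is correct and follows essentially the same route as the paper's own proof: you track the error in the $z$-variable via the composite update $\Pi_\Phi\circ(I-\alpha F)\circ\Pi_C$, use the fixed-point identity $z^*=(I-\alpha F)(\Pi_C(z^*))$ to show the single-step displacement equals $\|z^*-\Pi_\Phi z^*\|$, telescope with the geometric series exactly as in Lemma~\ref{lem:berts-err-bnd}, and obtain the $x$-bound from the $1$-Lipschitzness of $\Pi_C$. No gaps; the argument matches the paper step for step.
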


\begin{proof}
  We can follow almost the same proof strategy as in
  Lemma~\ref{lem:berts-err-bnd}: the update operator for $z$,
  \[
  \Pi_\Phi \circ (I-\alpha F) \circ \Pi_C
  \]
  is $\gamma$-Lipschitz as discussed above, so to get our error bound
  we just need to bound the change in a single update starting from
  $z^*$.  Since $z^*=(I-\alpha F)\circ\Pi_C(z^*)$,
  \begin{align*}
    \|z^* - \Pi_\Phi \circ (I-\alpha F) \circ \Pi_C (z^*)\|
    &= \|z^* - \Pi_\Phi z^*\|
  \end{align*}
  Just as before, we can now apply the update operator repeatedly to
  get bounds on the difference between successive iterates, and sum
  these bounds to show the desired result.
  
  To bound the error in $\bar x$, note  $\bar x = \Pi_C(\bar z)$
  and $x^* = \Pi_C(z^*)$, and use the fact that $\Pi_C$ is
  $1$-Lipschitz.
\end{proof}

As promised, comparing to~(\ref{eq:berts-err-bnd}), the form of the
new bound is identical except that we measure representation error as
$\|z^*-\Pi_\Phi(z^*)\|$ rather than $\|x^*-\Pi_{\hat C}(x^*)\|$.

Qualitatively, both the iterates $x^{(t)}$ and the approximate
solution $\bar x$ will be exactly feasible (i.e., $x^{(t)}\in C$,
$\bar x\in C$), just as for the Bertsekas-Galerkin method.  The
approximate optimality criterion is also similar: for some residual
vector $\epsilon\in\nullsp(\Phi\tr)$, we have that $-F(\bar
x)+\epsilon$ is in the normal cone to $C$ at $\bar x$ (see
Lemma~\ref{lem:new-gal-converge}).  So, the most important difference
between the two methods is that the new method searches over a larger
feasible region ($C$ instead of $\hat C$).

\begin{lemma}\label{lem:new-gal-converge}
  At the fixed point $(\bar x, \bar z)$
  of~(\ref{eq:new-gal-method1}--\ref{eq:new-gal-method2}), we have
  $-F(\bar x)+\epsilon\in N_C(\bar x)$ for some residual vector
  $\epsilon\in\nullsp(\Phi\tr)$.
\end{lemma}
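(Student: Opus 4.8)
The plan is to read the two fixed-point equations as two independent pieces of information and then combine them by elementary algebra. At the fixed point the iteration leaves $(\bar x,\bar z)$ unchanged, so $\bar x = \Pi_C(\bar z)$ and $\bar z = \Pi_\Phi(\bar x - \alpha F(\bar x))$. The first equation, coming from the projection onto $C$, should supply the normal-cone part of the optimality condition, while the second, coming from the projection onto $\sp(\Phi)$, should supply the null-space residual $\epsilon$.

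First I would extract a normal-cone condition from the $\Pi_C$ step. Writing $\bar z = \bar x + (\bar z - \bar x)$, the equation $\bar x = \Pi_C(\bar z)$ is exactly $\bar x = \Pi_C(\bar x + d)$ with $d = \bar z - \bar x$, so Lemma~\ref{lem:normalcone} yields $\bar z - \bar x \in N_C(\bar x)$. Next I would extract an orthogonality condition from the $\Pi_\Phi$ step: since $\Pi_\Phi$ is Euclidean projection onto $\sp(\Phi)$, its residual is orthogonal to the column space of $\Phi$, and the orthogonal complement of $\sp(\Phi)$ is precisely $\nullsp(\Phi\tr)$ (the same identification already used for the Bertsekas method). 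Hence the residual $r \equiv (\bar x - \alpha F(\bar x)) - \bar z$ lies in $\nullsp(\Phi\tr)$.

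Finally I would combine the two. Solving the residual identity for $\bar z - \bar x$ gives $\bar z - \bar x = -\alpha F(\bar x) - r$, and substituting into the normal-cone membership gives $-\alpha F(\bar x) - r \in N_C(\bar x)$. Because $N_C(\bar x)$ is a cone and $\alpha > 0$, I can scale by $1/\alpha$ to get $-F(\bar x) - r/\alpha \in N_C(\bar x)$, and then set $\epsilon = -r/\alpha$, which lies in $\nullsp(\Phi\tr)$ since that set is a subspace. This establishes $-F(\bar x)+\epsilon \in N_C(\bar x)$ as claimed. I do not expect a genuine obstacle: the argument is purely a matter of translating each projection into the right membership statement and then bookkeeping the sign and the factor $1/\alpha$ when passing from $\bar z - \bar x$ to $\epsilon$. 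The only point that deserves care is correctly identifying $\nullsp(\Phi\tr)$ as the orthogonal complement of $\sp(\Phi)$, after which the two projections contribute exactly the normal-cone term and the null-space term of the approximate optimality condition.
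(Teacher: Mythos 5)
Your proof is correct and follows essentially the same route as the paper's: both use Lemma~\ref{lem:normalcone} on the $\Pi_C$ equation to get $\bar z - \bar x \in N_C(\bar x)$, absorb the $\Pi_\Phi$ projection into a residual lying in $\nullsp(\Phi\tr)$, and rescale by $1/\alpha$ using the cone property. Your write-up is in fact slightly more explicit than the paper's (which compresses the orthogonality of the projection residual and the cone-scaling step into single phrases), but the argument is identical.
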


\begin{proof}
  By the assumption that $(\bar x, \bar z)$ is a fixed point, we have
  \begin{align}
    \label{lemngc:1}
    \bar x &= \Pi_C(\bar z)\\
    \label{lemngc:2}
    \bar z &=\Pi_\Phi(I-\alpha F)(\bar x)
  \end{align}
  (from~(\ref{eq:new-gal-method1}) and~(\ref{eq:new-gal-method2})
  respectively).  If we choose an appropriate
  $\epsilon'\in\nullsp(\Phi\tr)$, we can eliminate the projection
  operator $\Pi_\Phi$ from~(\ref{lemngc:2}), getting:
  \begin{align*}
    \bar z &= (I-\alpha F)(\bar x) + \epsilon' \\
    \bar z - \bar x &= -\alpha F(\bar x) + \epsilon' 
  \end{align*}
  Now $\bar z-\bar x\in N_C(\bar x)$, by~(\ref{lemngc:1}) and
  Lemma~\ref{lem:normalcone}.  So, taking $\epsilon=\epsilon'/\alpha$,
  the desired result follows.
\end{proof}

\section{Projective LCPs}

The projection method described above is by no means the only way to
find the Galerkin
approximation~(\ref{eq:new-gal-method1}--\ref{eq:new-gal-method2}).
If the operator $F$ is linear, another attractive algorithm is the
following interior point method.

Let $F(x)=Mx+q$, and suppose that the feasible set is a separable cone
$K$, so that we are solving the two equivalent problems VI$(F,K)$ and
CP$(F,K)$.  Saying that $x$ is a fixed point
of~(\ref{eq:new-gal-method1}--\ref{eq:new-gal-method2}) is equivalent
to
\begin{align}
x %
&= \Pi_K(\Pi_\Phi((I-\alpha M) x-\alpha q)) \nonumber\\
&= \Pi_K((I-(I-\Pi_\Phi+\alpha\Pi_\Phi M)) x-\alpha \Pi_\Phi q) \nonumber\\
&= \Pi_K((I-N)x-r) \label{eq:n-lcp}
\end{align}
where we have defined 
\begin{align}
\label{eq:n-def}
N=I-\Pi_\Phi+\alpha\Pi_\Phi M \qquad r=\alpha \Pi_\Phi q
\end{align}
(The first line substitutes $F(x)=Mx+q$
into~(\ref{eq:new-gal-method1}--\ref{eq:new-gal-method2}); the second
line distributes $\Pi_\Phi$ and then adds and subtracts $I$; the third
substitutes the definitions of $N$ and $r$.)

But now we can recognize~(\ref{eq:n-lcp}) as the optimality condition for
a new problem, CP$(Nx+r, K)$.  So, we can use any appropriate linear
complementarity problem solver to solve CP$(Nx+r, K)$.  Furthermore,
if the original matrix problem was strongly monotone, then the new
problem is strongly monotone too:

\begin{lemma}
  If $M$ is positive definite and $\alpha$ is chosen as in
  Lemma~\ref{lem:pd-contract}, then $N$ as defined in~(\ref{eq:n-def})
  is positive definite.
\end{lemma}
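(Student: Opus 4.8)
The plan is to show directly that $x\tr N x > 0$ for every nonzero $x$, by decomposing $x$ along $\sp(\Phi)$ and its orthogonal complement and reducing the entire expression to a $2\times 2$ quadratic form in the two component norms. Throughout I will use that $\Pi_\Phi$ is the orthogonal projector onto $\sp(\Phi)$, so it is symmetric and idempotent; writing $Q = I-\Pi_\Phi$ for the complementary projector lets me rewrite $N = Q + \alpha\,\Pi_\Phi M$. I also record the two facts about $M$ that I will need: since $M$ is positive definite, the strong-monotonicity constant $\beta$ is strictly positive and $w\tr M w \geq \beta\|w\|^2$ for all $w$; and since $M$ is linear and $L$-Lipschitz, $\|Mw\|\leq L\|w\|$ (so in particular $\alpha = \beta/L^2 > 0$).

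First I would write $x = u + v$ with $u = \Pi_\Phi x\in\sp(\Phi)$ and $v = Qx$, so that $u\tr v = 0$, $\Pi_\Phi u = u$, and $\Pi_\Phi v = 0$. Expanding $x\tr N x = x\tr Q x + \alpha\,x\tr\Pi_\Phi M x$ and using $x\tr\Pi_\Phi = u\tr$, the expression collapses to
\[
x\tr N x = \|v\|^2 + \alpha\,u\tr M u + \alpha\,u\tr M v .
\]
The first two terms are manifestly nonnegative; the cross term $\alpha\,u\tr M v$ is the only one that can be negative, and controlling it is the crux of the argument.

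Next I would bound the good terms from below via $u\tr M u \geq \beta\|u\|^2$ and bound the cross term via Cauchy--Schwarz and the Lipschitz estimate, $|u\tr M v| \leq \|u\|\,\|Mv\| \leq L\|u\|\,\|v\|$. Writing $a = \|u\|$, $b = \|v\|$ and substituting $\alpha = \beta/L^2$ gives
\[
x\tr N x \;\geq\; \frac{\beta^2}{L^2}\,a^2 \;-\; \frac{\beta}{L}\,ab \;+\; b^2 ,
\]
which is the quadratic form of the symmetric matrix with diagonal entries $\beta^2/L^2$ and $1$ and off-diagonal entry $-\beta/(2L)$. Its leading entry is positive and its determinant is $\beta^2/L^2 - \beta^2/(4L^2) = 3\beta^2/(4L^2) > 0$, so the form is positive definite; hence the right-hand side is strictly positive whenever $(a,b)\neq(0,0)$, i.e.\ whenever $x\neq 0$, which is exactly the claim.

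The main obstacle is precisely the non-symmetry of $\Pi_\Phi M$, which produces the indefinite cross term: with a poor step size a large $\|Mv\|$ relative to $u\tr M u$ could swamp the positive terms. The specific choice $\alpha = \beta/L^2$ from Lemma~\ref{lem:pd-contract} is what balances the $\beta^2/L^2$ and $\beta/L$ coefficients so that the $2\times2$ form clears the determinant test; this is the one place where the strong-monotonicity and Lipschitz constants are traded off against one another, and the point to verify carefully is that the determinant stays \emph{strictly} positive rather than merely nonnegative.
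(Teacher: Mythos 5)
Your proof is correct, but it takes a genuinely different route from the paper's. The paper's proof never decomposes $x$: it observes that $I-N=\Pi_\Phi(I-\alpha M)$ is $\gamma$-Lipschitz with $\gamma<1$ (a projection composed with the contraction from Lemma~\ref{lem:pd-contract}), then expands $\|(I-N)x\|^2\leq\gamma^2\|x\|^2$ into $x\tr x-2x\tr Nx+\|Nx\|^2\leq\gamma^2 x\tr x$ and rearranges to conclude $2x\tr Nx\geq(1-\gamma^2)\|x\|^2+\|Nx\|^2>0$. That argument is shorter, reuses the contraction machinery already built, and needs nothing about $\Pi_\Phi$ beyond $1$-Lipschitzness (in particular not symmetry or idempotence). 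Your argument instead splits $x=u+v$ along $\sp(\Phi)$ and its orthogonal complement, and does rely on $\Pi_\Phi$ being a symmetric idempotent --- which it is, since it is Euclidean projection onto a subspace --- to collapse $x\tr Nx$ to $\|v\|^2+\alpha u\tr Mu+\alpha u\tr Mv$ and then control the cross term with Cauchy--Schwarz and the Lipschitz bound. What your route buys: it is self-contained (it does not pass through the contraction property of $I-N$ at all), it makes visible exactly where positivity comes from (the complement contributes $\|v\|^2$, the span contributes $\alpha\beta\|u\|^2$, and the step size tames the coupling), and since $\|x\|^2=a^2+b^2$ it yields an explicit strong-monotonicity modulus for $N$, namely the smallest eigenvalue of your $2\times 2$ matrix. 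It also shows positive definiteness for the wider range $0<\alpha<4\beta/L^2$ (your determinant condition is $\alpha\beta-\alpha^2L^2/4>0$), whereas the paper's argument needs $I-\alpha M$ to be a strict contraction, i.e.\ $0<\alpha<2\beta/L^2$. All of your individual steps check out: the expansion of $x\tr Nx$, the bounds $u\tr Mu\geq\beta\|u\|^2$ and $|u\tr Mv|\leq L\|u\|\|v\|$, and the Sylvester test with determinant $3\beta^2/(4L^2)>0$ (strict because positive definiteness of $M$ forces $\beta>0$) are all valid.
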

\begin{proof}
  First note that $I-N$ is $\gamma$-Lipschitz, where $\gamma\in[0,1)$ is
  defined in Lemma~\ref{lem:pd-contract}: $I-N=\Pi_\Phi(I-\alpha M)$,
  and the RHS is the composition of a projection (which is
  $1$-Lipschitz) with $(I-\alpha M)$ (which is $\gamma$-Lipschitz).
  Now pick any $x\neq 0$.  We will show $x\tr N x > 0$, which implies
  that $N$ is positive definite:
  \begin{align*}
    \|(I-N)x\|^2 &\leq \gamma^2\|x\|^2\\
    x\tr x - 2x\tr N x + x\tr N\tr N x &\leq \gamma^2 x\tr x\\
    (1-\gamma^2) x\tr x + x\tr N\tr N x &\leq 2 x\tr N x\\
    0 &< x\tr N x
  \end{align*}
  The first line holds since $(I-N)$ is $\gamma$-Lipschitz.  The
  second line expands the squares.  The third line rearranges and
  collects terms.  The last line follows since the LHS of line 3 is
  strictly positive: $(1-\gamma^2)x\tr x$ is strictly positive since
  $x\neq 0$ and $\gamma < 1$, and $x\tr N\tr N x=\|Nx\|^2\geq 0$.
\end{proof}

The dimension of the new matrix $N$ is the same as that of the
original matrix $M$, so it is not clear that we are making progress by
expressing our LCP this way.  However, it turns out that we can solve
CP$(Nx+r,K)$ much more quickly than a general LCP of the same
dimension, due to the special structure of $N$.

In particular, $N$ is \emph{projective} in the sense
of~\cite{projective-lcp}.  So, as shown in that paper, we can run the
Unified Interior Point (UIP) method of Kojima et
al.~\cite{kojima-etal-uip} very quickly: if $M\in\Re^{n\times n}$ and
$\Phi\in\Re^{n\times k}$, then each iteration of the UIP method takes
time $O(nk^2)$, only linear in $n$.  (By contrast, an iteration of UIP
ordinarily requires solving an $n\times n$ system of equations, usually
much more expensive if $k\ll n$.)  Furthermore, as shown
in~\cite{kojima-etal-uip}, the number of iterations required is
polynomial for any monotone LCP\@.

\bibliographystyle{unsrt}
\bibliography{lcpmdp}

\end{document}